\newtheorem{theorem}{Theorem}
\newtheorem{lemma}{Lemma}
\newcommand{\TM}{\widehat{T}}
\newcommand{\TT}{T}
\def\eqref#1{equation~\ref{#1}}
\def\1{\bm{1}}
\DeclareMathAlphabet{\mathsfit}{\encodingdefault}{\sfdefault}{m}{sl}
\SetMathAlphabet{\mathsfit}{bold}{\encodingdefault}{\sfdefault}{bx}{n}
\def\gN{{\mathcal{N}}}
\newcommand{\E}{\mathbb{E}}
\newcommand{\Var}{\mathrm{Var}}
\DeclareMathOperator*{\argmax}{arg\,max}
\newcommand\bbE{\ensuremath{\mathbb{E}}} 
\icmltitlerunning{Calibrated Model-Based Deep Reinforcement Learning}
\begin{document}

\twocolumn[
\icmltitle{Calibrated Model-Based Deep Reinforcement Learning}




\icmlsetsymbol{equal}{*}

\begin{icmlauthorlist}
\icmlauthor{Ali Malik}{equal,st}
\icmlauthor{Volodymyr Kuleshov}{equal,st,af}
\icmlauthor{Jiaming Song}{st}
\icmlauthor{Danny Nemer}{af}
\icmlauthor{Harlan Seymour}{af}
\icmlauthor{Stefano Ermon}{st}
\end{icmlauthorlist}

\icmlcorrespondingauthor{Ali Malik}{malikali@stanford.edu}
\icmlcorrespondingauthor{Volodymyr Kuleshov}{kuleshov@cs.stanford.edu}

\icmlaffiliation{st}{Department of Computer Science, Stanford University, USA}
\icmlaffiliation{af}{Afresh Technologies, San Francisco, USA}

\icmlkeywords{Machine Learning, ICML}

\vskip 0.3in
]






\printAffiliationsAndNotice{\icmlEqualContribution} 

\begin{abstract}

Estimates of predictive uncertainty are important for accurate model-based planning and reinforcement learning. However, predictive uncertainties
 --- especially ones derived from modern deep learning systems
 --- can be inaccurate and impose a bottleneck on performance. This paper explores which uncertainties are needed for model-based reinforcement learning and argues that good uncertainties must be calibrated, i.e. their probabilities should match empirical frequencies of predicted events. We describe a simple way to augment any model-based reinforcement learning agent with a calibrated model and show that doing so consistently improves planning, sample complexity, and exploration. On the \textsc{HalfCheetah} MuJoCo task, our system achieves state-of-the-art performance using 50\% fewer samples than the current leading approach.
Our findings suggest that calibration can improve the performance of model-based reinforcement learning with minimal computational and implementation overhead.

\end{abstract}

\section{Introduction}

Methods for accurately assessing predictive uncertainty are important components of modern decision-making systems. Probabilistic methods have been used to improve the safety, interpretability, and performance of decision-making agents in various domains, including medicine~\citep{saria_sepsis_2018},
robotics~\citep{chua2018,buckman2018sample}, and operations research~\citep{van1997neuro}. 

In model-based reinforcement learning --- a setting in which an agent learns a model of the world from past experience and uses it to plan future decisions ---  capturing uncertainty in the agent's model is particularly important \cite{deisenroth2011pilco}.
Planning with a probabilistic model improves performance and sample complexity, especially when representing the model using a deep neural network~\cite{rajeswaran2016epopt,chua2018}.



\begin{figure}
    \centering
    \includegraphics[width=0.9\linewidth]{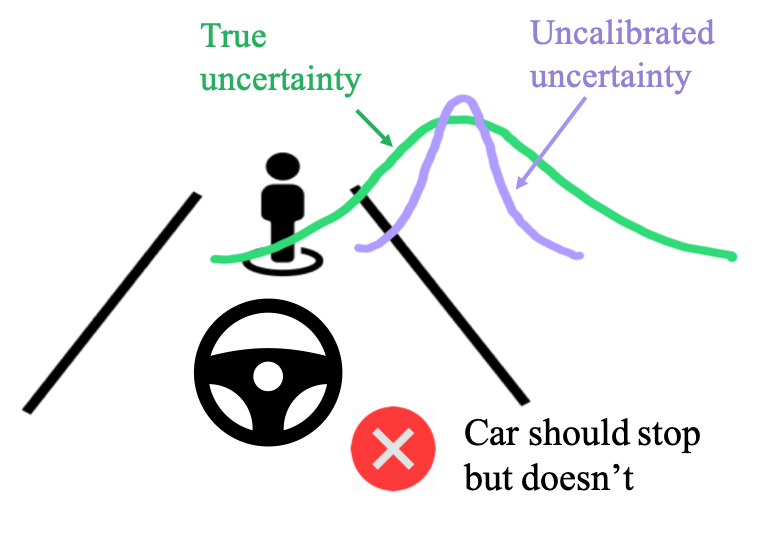}
    \caption{Modern model-based planning algorithms with probabilistic models can over-estimate their confidence (purple distribution), and overlook dangerous outcomes (e.g., a collision). We show how to endow agents with a calibrated world model that accurately captures true uncertainty (green distribution) and improves planning in high-stakes scenarios like autonomous driving or industrial optimisation.}
    \label{fig:calib_car_fig}
\end{figure}
 
Despite their importance in decision-making, predictive uncertainties can be unreliable, especially when derived from deep neural networks \cite{guo17}. 
Although several modern approaches such as deep ensembles~\citep{lakshminarayanan2017simple} and approximations of Bayesian inference~\citep{gal2016dropout,gal2016theoretically,gal2017concrete} provide uncertainties from deep neural networks, these methods suffer from shortcomings that reduce their effectiveness for planning \cite{kuleshov18}.


In this paper, we study which uncertainties are needed in model-based reinforcement learning and argue that good predictive uncertainties must be calibrated,
i.e. their probabilities should match empirical frequencies of predicted events. 
We propose a simple way to augment any model-based reinforcement learning algorithm with a calibrated model by adapting recent advances in uncertainty estimation for deep neural networks \cite{kuleshov18}. 
We complement our approach with diagnostic tools, best-practices, and intuition on how to apply calibration in reinforcement learning.

We validate our approach on benchmarks for contextual bandits and continuous control~\citep{Li2010,todorov2012mujoco}, as well as on a planning problem in inventory management~\citep{van1997neuro}. Our results show that calibration consistently improves the cumulative reward and the sample complexity of model-based agents, and also enhances their ability to balance exploration and exploitation in contextual bandit settings.
Most interestingly, on the {\sc HalfCheetah} task, our system achieves state-of-the-art performance, using 50\% 
fewer samples than the previous leading approach \citep{chua2018}. 
Our results suggest that calibrated uncertainties have the potential to improve model-based reinforcement learning algorithms with minimal computational and implementation overhead.

\paragraph{Contributions.}
In summary, this paper adapts recent advances in uncertainty estimation for deep neural networks to reinforcement learning and proposes a simple way to improve any model-based algorithm with calibrated uncertainties. We explain how this technique improves the accuracy of planning and the ability of agents to balance exploration and exploitation. 
Our method consistently improves performance on several reinforcement learning tasks, including contextual bandits, inventory management and  continuous control\footnote{Our code is available at \url{https://github.com/ermongroup/CalibratedModelBasedRL}}.

\section{Background}\label{sec:background}

\subsection{Model-Based Reinforcement Learning}


Let $\mathcal{S}$ and $\mathcal{A}$ denote (possibly continuous) state and action spaces in a Markov Decision Process $(\mathcal{S}, \mathcal{A}, T, r)$
and let $\Pi$ denote the set of all stationary stochastic policies $\pi:\mathcal{S} \rightarrow \mathcal{P}(\mathcal{A})$ that choose actions in $\mathcal{A}$ given states in $\mathcal{S}$. The successor state $s'$ for a given action $a$ from current state $s$ are drawn from the dynamics function $T(s'|s,a)$. We 
work in the $\gamma$-discounted infinite horizon setting 
and 
we will use an expectation with respect to a policy $\pi\in\Pi$ to denote an expectation with respect to the trajectory it generates: $\bbE_{\pi}[r(s,a)] \triangleq \bbE\left[\sum_{t=0}^\infty \gamma^t r(s_t,a_t)\right]$, where $s_0 \sim p_0$, $a_t \sim \pi(\cdot|s_t)$, and $s_{t+1}\sim \TT(\cdot|s_t,a_t)$ for $t\geq 0$. $p_0$ is the initial distribution over states and $r(s_t,a_t)$ is the reward at time $t$.

Typically, $\mathcal{S},\mathcal{A}, \gamma$ are known, while the dynamics model $T(s'|s,a)$ and the reward function $r(s,a)$ are not known explicitly. This work focuses on model-based reinforcement learning, in which the agent learns an approximate model $\hat T(s'|s,a)$ of the world from samples obtained by interacting with the environment and uses this model to plan its future decisions.



\paragraph{Probabilistic Models}

This paper focuses on probabilistic dynamics models $\TM(s'|s,a)$ that take a current state $s \in \mathcal{S}$ and action $a \in \mathcal{A}$, and output a probability distribution over future states $s'$. We represent the output distribution over the next states, $\TM(\cdot|s,a)$, as a cumulative distribution function $F_{s, a} : \mathcal{S} \to [0,1]$, which is defined for both discrete and continuous $\mathcal{S}$. 




\subsection{Calibration, Sharpness, and Proper Scoring Rules}


A key desirable property of probabilistic forecasts is calibration. Intuitively, a transition model $\TM(s'|s,a)$ 
is calibrated if whenever it assigns a probability of $0.8$ to an event --- such as a state transition $(s,a,s')$ --- that transition should occur about 80\% of the time. 



Formally, for a discrete state space $\mathcal{S}$ and when $s,a,s'$ are i.i.d.~realizations of random variables $S, A, S' \sim \mathbb{P}$, we say that a transition model $\TM$ is calibrated if
$$ P(S' = s' \mid \TM(S' = s'|S,A) = p) = p $$
for all $s' \in \mathcal{S}$ and $p \in [0,1]$.

When $\mathcal{S}$ is a continuous state space, calibration is defined using quantiles as
$
P(S' \leq F_{S,A}^{-1}(p)) = p \; \textrm{for all $p \in [0,1]$},
$
where $F_{s, a}^{-1}(p) = \inf\{y : p \leq F_{s, a}(y)\}$ is the quantile function associated with the CDF $F_{s, a}$ over future states $s'$ \cite{gneiting2007probabilistic}. A multivariate extension can be found in \citet{kuleshov18}.


Note that calibration alone is not enough for a model to be good. For example, assigning the same average probability to each transition may suffice as a calibrated model, but this model will not be useful. Good models need to also be sharp: intuitively, their probabilities should be maximally certain, i.e. close to 0 or 1. 

\paragraph{Proper Scoring Rules.}

In the statistics literature, probabilistic forecasts are typically assessed using proper scoring rules \cite{murphy1973vector,dawid1984prequential}. 
An example is the Brier score $L(p,q) = (p-q)^2$ defined over two Bernoulli distributions with natural parameters $p, q \in [0,1]$. 
Crucially, any proper scoring rule decomposes precisely into a calibration and a sharpness term \cite{murphy1973vector}:
$$
\text{Proper Scoring Rule} = \text{Calibration} + \text{Sharpness} + \text{const}.
$$
Most loss functions for probabilistic forecasts over both discrete and continuous variables are proper scoring rules \cite{gneiting2007strictly}. Hence, calibration and sharpness are precisely the two sufficient properties of a good forecast.




\subsection{Recalibration}

Most predictive models are not calibrated out-of-the-box \cite{niculescu2005predicting}.
However, given an arbitrary pre-trained forecaster $H: \mathcal{X} \to (\mathcal{Y} \to [0, 1])$ 
that outputs CDFs $F$, we may train an auxiliary model $R : [0,1] \to [0,1]$ such that the forecasts $R \circ F$ are calibrated in the limit of enough data. This recalibration procedure applies to any probabilistic regression model and does not worsen the original forecasts from $H$ when measured using a proper scoring rule~\citep{kuleshov2017estimating}. 

    
    

When $\mathcal S$ is discrete, a popular choice of $R$ is Platt scaling~\citep{platt1999probabilistic}; \citet{kuleshov18} extends Platt scaling to continuous variables. Either of these methods can be used within our framework. 
\section{What Uncertainties Do We Need In Model-Based Reinforcement Learning?}

In model-based reinforcement learning, probabilistic models improve the performance and sample complexity of planning algorithms~\cite{rajeswaran2016epopt,chua2018}; this naturally raises the question of what constitutes a good probabilistic model.

\subsection{Calibration vs. Sharpness Trade-Off}

A natural way of assessing the quality of a probabilistic model is via a proper scoring rule \cite{murphy1973vector,gneiting2007probabilistic}. As discussed in Section \ref{sec:background}, any proper scoring rule decomposes into a calibration and a sharpness term. Hence, these are precisely the two qualities we should seek.

Crucially, not all probabilistic predictions with the same proper score are equal: some are better calibrated, and others are sharper. There is a natural trade-off between these terms. 

In this paper, we argue that this trade-off plays an important role when specifying probabilistic models in reinforcement learning. Specifically, it is much better to be calibrated than sharp, and calibration significantly impacts the performance of model-based algorithms. Recalibration methods \cite{platt1999probabilistic,kuleshov18} allow us to ensure that a model is calibrated, and thus improve reinforcement learning agents.

\subsection{Importance of Calibration for Decision-Making}

In order to explain the importance of calibration, we provide some intuitive examples, and then prove a formal statement.

\paragraph{Intuition.}

Consider a simple MDP with two states $s_\text{good}$ and $s_\text{bad}$. The former has a high reward $r(s_\text{good}) = 1$ and the latter has a low reward $r(s_\text{bad}) = -1$. 

First, calibration helps us better estimate expected rewards. Consider the expected reward $\hat r$ from taking action $a$ in $s_\text{good}$ under the model. It is given by
$\hat r=  -1 \cdot \hat T(s_\text{bad}|s_\text{good}, a) + 1 \cdot \hat T(s_\text{good}|s_\text{good}, a). $
If the true transition probability is $T(s_\text{good}|s_\text{good}, a) = 80\%$, but our model $\hat T$ predicts 60\%, then in the long run the average reward from $a$ in $s_\text{good}$ will not equal to $\hat r$; incorrectly estimating the reward will in turn cause us to choose sub-optimal actions.
   
Similarly, suppose that the model is over-confident and $\hat T(s_\text{good}|s_\text{good}, a) = 0$; intuitively, we may decide that it is not useful to try $a$ in $s_\text{good}$, as it leads to $s_\text{bad}$ with 100\% probability. 
This is an instance of the classical exploration-exploitation problem; many approaches to this problem (such as the UCB family of algorithms) rely on accurate confidence bounds and are likely to benefit from calibrated uncertaintites that more accurately reflect the true probability of transitioning to a particular state.

\paragraph{Expectations Under Calibrated Models.}

More concretely, we can formalise our intuition about the accuracy of expectations via the following statement for discrete variables; see the Appendix for more details.

\begin{lemma}
Let $Q(Y | X)$ be a calibrated model over two discrete variables $X, Y \sim \mathbb{P}$ such that  $\mathbb{P}(Y = y \mid Q(Y = y \mid X) = p) = p$. Then any expectation of a function $G(Y)$ is the same under $P$ and $Q$:
\begin{equation}
    \mathlarger{\mathop{\mathbb{E}}}_{y \sim \mathbb{P}(Y)} \left[ G(y) \right] \quad = \mathlarger{\mathop{\mathbb{E}}}_{\substack{x \sim \mathbb{P}(X) \\ y \sim Q(Y | X = x)}} \left[G(y)\right].
\end{equation}
\end{lemma}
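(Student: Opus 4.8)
The plan is to reduce the claimed equality of expectations to a single scalar identity: for every fixed value $y$, the model-averaged probability $\mathbb{E}_{x \sim \mathbb{P}(X)}\big[Q(Y = y \mid X = x)\big]$ coincides with the true marginal $\mathbb{P}(Y = y)$. Once that is in hand, linearity of expectation immediately yields the lemma.

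First I would expand the right-hand side. Writing the expectation as a double sum over the discrete spaces and exchanging the order of summation,
$$\mathbb{E}_{\substack{x \sim \mathbb{P}(X)\\ y \sim Q(Y\mid X=x)}}[G(y)] = \sum_{y} G(y) \sum_x \mathbb{P}(X = x)\, Q(Y = y \mid X = x) = \sum_y G(y)\, \mathbb{E}_{x \sim \mathbb{P}(X)}\big[Q(Y = y \mid X = x)\big].$$
So it suffices to prove $\mathbb{E}_{x \sim \mathbb{P}(X)}\big[Q(Y = y \mid X = x)\big] = \mathbb{P}(Y = y)$ for each $y$; substituting this back gives $\sum_y G(y)\,\mathbb{P}(Y = y) = \mathbb{E}_{y \sim \mathbb{P}(Y)}[G(y)]$, the left-hand side.

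Next I would establish the scalar identity using the calibration hypothesis. Fix $y$ and regard $p_y(X) \triangleq Q(Y = y \mid X)$ as a random variable; since $X$ is discrete, it takes at most countably many values. Partitioning on the value of $p_y(X)$ and invoking the assumption $\mathbb{P}\big(Y = y \mid Q(Y=y\mid X) = p\big) = p$,
$$\mathbb{P}(Y = y) = \sum_p \mathbb{P}\big(Y = y \mid p_y(X) = p\big)\,\mathbb{P}\big(p_y(X) = p\big) = \sum_p p\cdot \mathbb{P}\big(p_y(X) = p\big) = \mathbb{E}_{x \sim \mathbb{P}(X)}\big[Q(Y = y \mid X = x)\big],$$
where terms with $\mathbb{P}\big(p_y(X) = p\big) = 0$ drop out. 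Equivalently, this is the law of total expectation: $\mathbb{P}(Y=y) = \mathbb{E}\big[\mathbf{1}\{Y = y\}\big] = \mathbb{E}\big[\,\mathbb{E}[\mathbf{1}\{Y=y\} \mid p_y(X)]\,\big] = \mathbb{E}\big[p_y(X)\big]$, the last equality being exactly calibration. This closes the argument.

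The main obstacle is not conceptual but is the bookkeeping around conditioning: the hypothesis conditions on the event $\{Q(Y=y\mid X)=p\}$ for arbitrary $p \in [0,1]$, and one must be careful to apply it only for values $p$ attained with positive probability — which is automatic here because $X$ (hence $p_y(X)$) is discrete, but would require the measure-theoretic conditional-expectation (tower-property) version sketched above for a continuous-$X$ analogue. A secondary point to state explicitly is the interchange of the two summations in the first display, which is justified whenever $\sum_y |G(y)|\,\mathbb{P}(Y=y) < \infty$ (e.g. $\mathcal{Y}$ finite, as in the motivating two-state MDP), so that Fubini/Tonelli applies.
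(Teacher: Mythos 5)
Your proof is correct and follows essentially the same route as the paper's: both reduce the claim to the scalar identity $\mathbb{E}_{x \sim \mathbb{P}(X)}\left[Q(Y = y \mid X = x)\right] = \mathbb{P}(Y = y)$ and establish it by partitioning on the value of $Q(Y = y \mid X)$ and invoking the calibration hypothesis. Your discrete sum over the attained values of $p$ (together with the remark about conditioning only on positive-probability events and the Fubini justification) is in fact a slightly more careful rendering of the same step that the paper writes as an integral $\int_0^1 \cdots \, dp$.
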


In model-based reinforcement learning, we take expectations in order to compute the expected reward of a sequence of decisions. A calibrated model will allow us to estimate these more accurately.

\section{Calibrated Model-Based Reinforcement Learning}

In Algorithm \ref{alg:recal}, we present a simple procedure that
augments a model-based reinforcement learning algorithm with an extra step that ensures the calibration of its transition model.
Algorithm \ref{alg:recal} effectively corresponds to standard model-based reinforcement learning with the addition of Step 4, in which we train a recalibrator $R$ such that $R \circ T$ is calibrated. The subroutine $\textsc{Calibrate}$ can be an instance of Platt scaling, for discrete $S$, or the method of \citet{kuleshov18}, when $S$ is continuous (see Algorithm \ref{alg:kul_cal} in the appendix).

\begin{algorithm}
  \caption{{\footnotesize Calibrated Model-Based Reinforcement Learning}}
  \label{alg:recal}
  \textbf{Input:}
    Initial transition model $\TM: \mathcal{S} \times \mathcal{A} \to \mathcal{P}(\mathcal{S})$ and 
    initial dataset of state transitions $\mathcal{D} = \{(s_t, a_t), s_{t+1}\}_{t=1}^{N}$\\
    Repeat until sufficient level of performance is reached:
   \begin{enumerate}
   \vspace{-2mm}
    \item Run the agent and collect a dataset of state transitions $\mathcal{D}_\text{new} \gets \textsc{ExecutePlanning}(\TM)$. Gather all experience data $\mathcal{D} \gets \mathcal{D} \cup \mathcal{D}_\text{new}$.
    \vspace{-2mm}
    \item Let $\mathcal{D}_\text{train}, \mathcal{D}_\text{cal} \gets \textsc{PartitionData}(\mathcal{D})$ be the training and calibration sets, respectively.
    \vspace{-2mm}
    \item Train a transition model $\TM \gets \textsc{TrainModel}(\mathcal{D}_\text{train})$.
    \vspace{-2mm}
    \item Train the recalibrator $R \gets \textsc{Calibrate}(\TM, \mathcal{D}_\text{cal})$.
    \vspace{-2mm}
    \item Let $\TM \gets R \circ \TM$ be the new, recalibrated transition model.
   \end{enumerate}
\end{algorithm}

In the rest of this section, we describe best practices for applying this method.


\paragraph{Diagnostic Tools.} 


An essential tool for visualising calibration of predicted CDFs $F_1, \ldots F_N$ is the reliability curve \cite{gneiting2007probabilistic}. This plot displays the empirical frequency of points in a given interval relative to the predicted fraction of points in that interval. Formally, we choose $m$ thresholds $0 \leq p_1 \leq \cdots \leq p_m \leq 1$ and, for each threshold $p_j$, compute the empirical frequency 
$
    \hat{p}_j = {|y_t : F_t(y_t) \leq p_j, t = 1, \ldots, N|}/{N}
$

Plotting $\{(p_j, \hat{p}_j)\}$ gives us a sense of the calibration of the model  (see Figure \ref{fig:linucb_avg_rewards}), with a straight line corresponding to perfect calibration. An equivalent, alternative visualisation is to plot a histogram of of the probability integral transform $\{F_t(y_t)\}_{t=1}^N$ and see if it looks like a uniform distribution \cite{gneiting2007probabilistic}.

These visualisations can be quantified by defining the calibration loss\footnote{This is the calibration term in the two-component decomposition of the Brier score.} of a model:
\begin{align}
    \text{cal}(F_1, y_1, \ldots F_t,  y_Y) = \sum_{j = 1}^m (\hat{p}_j - p_j)^2, \label{eq:cal_loss}
\end{align}
as the sum of the squares of the residuals $(\hat{p}_j - p_j)^2$. These diagnostic tools should be evaluated on unseen data distinct from the training and calibration sets as it may reveal signs of overfitting.




\subsection{Applications to Deep Reinforcement Learning}

Although deep neural networks can significantly improve model-based planning algorithms~\citep{higuera2018synthesizing,chua2018}, their estimates of predictive uncertainty are often inaccurate \cite{guo17, kuleshov18}. 

\paragraph{Variational Dropout.}
One popular approach to deriving uncertainty estimates from deep neural networks involves using dropout.
Taking the mean and the variance of dropout samples leads to a principled Gaussian approximation of the posterior predictive distribution from a Bayesian neural network (in regression)~\citep{gal2016dropout}. To use Algorithm \ref{alg:recal} we may instantiate {\sc Calibrate} with the method of \citet{kuleshov18} and pass  it the predictive Gaussian derived from the dropout samples.

More generally, our method can be naturally applied on top of any probabilistic model without any need to modify or retrain this model. 


\section{The Benefits of Calibration in Model-Based Reinforcement Learning}

Next, we examine specific ways in which Algorithm \ref{alg:recal} can improve model-based reinforcement learning agents.

\subsection{Model-Based Planning}

The first benefit of a calibrated model is enabling more accurate planning using standard algorithms such as value iteration or model predictive control~\citep{sutton2018reinforcement}.
Each of these methods involves estimates of future reward. For example, value iteration performs the update
$$ V'(s) \gets \mathbb{E}_{a \sim \pi(\cdot|s)}\left[ \sum_{s' \in S} \hat T(s'|s,a)(r(s') + V(s'))  \right].$$
Crucially, this algorithm requires accurate estimates of the expected reward $\sum_{s' \in S} \hat T(s'|s,a)r(s')$. Similarly, online planning algorithms involve computing the expected reward of a finite sequence of actions, which has a similar form.
If the model is miscalibrated, then the predicted distribution $\hat T(s'|s,a)$ will not accurately reflect the true distribution of states that the agent will encounter in the real world. As a result, planning performed in the model will be inaccurate. 

More formally, let us define the value of a policy $\pi$ as $V(\pi) = \mathbb{E}_{s \sim \sigma_\pi} [V(s)]$, where $\sigma_\pi$ is the stationary distribution of the Markov chain induced by $\pi$. Let $V'(\pi)$ be an estimate of the value of $\pi$ in a second MDP in which we replaced the transition dynamics by a calibrated model $\TM$ learned from data. Then, the following holds.

\begin{theorem}
Let  $(S, A, T, r)$ be a discrete MDP and let $\pi$ be a stochastic policy over this MDP. The value $V(\pi)$ of policy $\pi$ under the true dynamics $T$ is equal to the value $V'(\pi)$ of the policy under any set of calibrated dynamics $\TM$.
\end{theorem}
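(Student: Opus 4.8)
The essential observation is that calibration is an \emph{averaged} property: it does not force $\TM(\cdot\mid s,a)=T(\cdot\mid s,a)$ pointwise, nor $V'=V$ pointwise, so the whole argument has to stay at the level of expectations and lean on the Lemma above. The plan is to fix $\TM$ to be calibrated with respect to the on-policy distribution generated by $\pi$ in the true MDP, namely $S\sim\sigma_\pi$, $A\sim\pi(\cdot\mid S)$, $S'\sim T(\cdot\mid S,A)$ --- this is precisely the distribution of the transition data on which the model is fit. The first step is to note that, since $\sigma_\pi$ is the stationary distribution of the chain induced by $\pi$ under the true dynamics, the marginal law of the successor state $S'$ under this $\mathbb{P}$ is again $\sigma_\pi$.

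Next I would apply the Lemma with $X=(S,A)$, $Y=S'$, $Q=\TM$. Taking $G=\mathbf{1}[\,\cdot=s'\,]$ for each $s'\in S$ and using the previous remark gives $\sum_{s,a}\sigma_\pi(s)\,\pi(a\mid s)\,\TM(s'\mid s,a)=\sigma_\pi(s')$; that is, $\sigma_\pi$ is \emph{also} a stationary distribution of the Markov chain that $\pi$ induces in the model MDP. Consequently $V'(\pi)$, the value of $\pi$ under $\TM$, equals $\mathbb{E}_{s\sim\sigma_\pi}[V'(s)]$, where $V'$ solves the model Bellman equation $V'(s)=\mathbb{E}_{a\sim\pi(\cdot\mid s),\,s'\sim\TM(\cdot\mid s,a)}[\,r(s,a)+\gamma V'(s')\,]$.

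Then I would take the expectation of this Bellman equation over $s\sim\sigma_\pi$ and apply the Lemma a second time, now with $G=V'$, to the term $\mathbb{E}_{s\sim\sigma_\pi,\,a\sim\pi,\,s'\sim\TM}[V'(s')]$, which collapses to $\mathbb{E}_{s'\sim\sigma_\pi}[V'(s')]$. This yields $(1-\gamma)\,\mathbb{E}_{s\sim\sigma_\pi}[V'(s)]=\mathbb{E}_{s\sim\sigma_\pi,\,a\sim\pi}[r(s,a)]$, hence $V'(\pi)=\tfrac{1}{1-\gamma}\,\mathbb{E}_{s\sim\sigma_\pi,\,a\sim\pi}[r(s,a)]$. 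The same computation for the true dynamics --- where the successor marginal is $\sigma_\pi$ by stationarity, so no appeal to the Lemma is even needed --- gives $V(\pi)=\tfrac{1}{1-\gamma}\,\mathbb{E}_{s\sim\sigma_\pi,\,a\sim\pi}[r(s,a)]$. Comparing the two expressions gives $V(\pi)=V'(\pi)$.

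\textbf{Where the difficulty sits.} The hard part is conceptual rather than computational: one must avoid any pointwise claim (calibration gives neither $\TM=T$, nor $V'=V$, nor agreement of the two Bellman operators on individual states) and recognise that the only surviving quantities are expectations against $\sigma_\pi$, which is exactly the shape the Lemma handles. A secondary point is making sure $V'(\pi)$ is interpreted as the value under the \emph{model}'s stationary distribution; the first use of the Lemma is what identifies that distribution with $\sigma_\pi$ (and, when the model chain is ergodic, shows the two coincide uniquely). Boundedness of $r$ on the finite $S$ together with $\gamma<1$ makes all the geometric sums converge, so those manipulations are routine.
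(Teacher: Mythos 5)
Your proposal is correct and rests on the same key lemma as the paper, but it is organized along a genuinely different (and more careful) route. The paper's proof applies the lemma exactly once, with $G$ taken to be the \emph{true} value function $V$: it writes $\widehat{V}(\pi) = \mathbb{E}_{s\sim\sigma_\pi}[R(s)] + \gamma\,\mathbb{E}_{(s,a)\sim\mathbb{P}}\,\mathbb{E}_{s'\sim\TM(\cdot|s,a)}[V(s')]$ and collapses the inner expectation to $\mathbb{E}_{s'\sim\mathbb{P}(S')}[V(s')]$, so it is really checking that one Bellman backup of the true $V$ is preserved in aggregate; it never pins down what the model's own value function is or which stationary distribution $V'(\pi)$ should be averaged against. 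You resolve both ambiguities: the first application of the lemma (with indicator test functions) shows $\sigma_\pi$ remains stationary for the chain induced by $\pi$ under $\TM$, which legitimizes averaging the model's value function against $\sigma_\pi$; the second application (with $G=V'$, the fixed point of the \emph{model's} Bellman operator) together with the identity $(1-\gamma)\,\mathbb{E}_{s\sim\sigma_\pi}[V'(s)] = \mathbb{E}_{s\sim\sigma_\pi,\,a\sim\pi}[r(s,a)]$ reduces both $V(\pi)$ and $V'(\pi)$ to the same average-reward expression. What your version buys is a statement that is actually about the value computed entirely under the calibrated dynamics, at the cost of needing the stationarity-preservation step (and, as you note, ergodicity if one insists on uniqueness of the model's stationary distribution); what the paper's shorter version buys is brevity, at the cost of implicitly conflating $V$ and $\widehat{V}$ inside the backup. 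Both arguments correctly confine themselves to expectations under $\sigma_\pi$, which is the only level at which calibration gives any guarantee.
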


Effectively, having a calibrated model makes it possible to compute accurate expectations of rewards, which in turn provides accurate estimates of the values of states and policies. Accurately estimating the value of a policy makes it easier to choose the best one by planning.

\begin{table}[]
\small
\begin{tabular}{l|l|l|l}
\textit{}  & \textit{LinUCB}           & \textit{CalLinUCB} & \textit{Optimal} \\
\hline
Linear    & $1209.8 \pm 12.1$ &   $1210.3 \pm 12.1$    &    $1231.8$     \\
Beta      & $1176.3 \pm 11.9$ & $1174.6 \pm 12.0$           &  $1202.3$  \\\midrule
Mushroom  &     $1429.4  \pm 154.0$     &   $\mathbf{1676.1 \pm 164.1}$          &    $3122.0$     \\
Covertype &      $558.14 \pm 3.5$      &     $\mathbf{677.8 \pm 5.0}$      &    $1200.0$     \\
Adult     &  $131.3 \pm 1.2$          &   $\mathbf{198.9  \pm 4.7}$ &     $1200.0$    \\
Census    &       $207.6 \pm 1.7$           &         $\mathbf{603.7 \pm 3.8}$  &  $1200.0$
\end{tabular}
\caption{Performance of calibrated/uncalibrated LinUCB on a variety of datasets, averaged over $10$ trials. The calibrated algorithm (CalLinUCB) does better on all non-synthetic datasets (bottom four rows) and has similar performance on the synthetic datasets (top two rows).}
\label{table:linucb_perf}
\end{table}

\subsection{Balancing Exploration and Exploitation}

Balancing exploration and exploitation successfully is a fundamental challenge for many reinforcement learning (RL) algorithms. A large family of algorithms tackle this problem using notions of uncertainty/confidence to guide their exploration process. For example, upper confidence bound (UCB, \citet{auer2002finite}) algorithms pick the action which has the highest upper bound on its reward confidence interval.
    
In situations where the outputs of the algorithms are uncalibrated, the confidence intervals might provide unreliable upper confidence bounds, resulting in suboptimal performance. For example, in a two-arm bandit problem, if a model is under-estimating the reward of the best arm and has high confidence, it's upper confidence bound will be low, and it will not be selected. More generally, UCB-style methods need uncertainty estimates to be on the same ``order of magnitude" so that arms can be compared against each other; calibration helps ensure that.

\section{Experiments}

We evaluate our calibrated model-based reinforcement learning method on several different environments and algorithms, 
including contextual bandits, inventory management, and continuous control for robotics.

\subsection{Balancing Exploration and Exploitation}

        To test the effect of calibration on exploration/exploitation, we look at the contextual multi-armed bandit problem~\cite{Li2010}. At each timestep, an agent is shown a context vector $\mathbf{x}$ and must pick an arm $a \in \mathcal{A}$ from a finite set $\mathcal{A}$. After picking an arm, the agent receives a reward $r_{a, \mathbf{x}}$ which depends both on the arm picked and also on the context vector shown to the agent.  The agent's goal over time is to learn the relationship between the context vector and reward gained from each arm so that it can pick the arm with the highest expected reward at each timestep.

    \paragraph{Setup.} 
    For our experiments, we focus on the LinUCB algorithm \cite{Li2010} --- a well-known instantiation of the UCB approach to contextual bandits. LinUCB assumes a linear relationship between the context vector and the expected reward of an arm: for each arm $a \in \mathcal{A}$, there is an unknown coefficient vector $\theta_a^*$ such that $\E[r_{a, \mathbf{x}}] = \mathbf{x}^\top \theta_a^*$.  
    
    LinUCB learns a predictive distribution over this reward using Bayesian ridge regression, in which $\theta_a^*$ has a Gaussian posterior  $\mathcal{N}(\hat{\theta}_a, \hat{\Sigma}_a)$. The posterior predictive distribution is also Gaussian, with mean $\mathbf{x}^\top \hat{\theta}_a$ and with standard deviation $\sqrt{\mathbf{x}^\top \hat{\Sigma}_a^{-1} \mathbf{x}}$. Thus, the algorithm picks the arm with the highest $\alpha$-quantile, given by 
    
    \begin{equation}
     \argmax_{a \in \mathcal{A}}\left(\mathbf{x}^\top \hat{\theta}_a + \alpha \cdot \sqrt{\mathbf{x}^\top \hat{\Sigma}_a^{-1} \mathbf{x}}\right).
    \end{equation}
    
    We apply the recalibration scheme in Algorithm \ref{alg:recal} of \citet{kuleshov18} to these predicted Gaussian distributions.

    \paragraph{Data.} 
    We evaluate the calibrated version (CalLinUCB) and uncalibrated version (LinUCB) of the LinUCB algorithm on both synthetic data that satisfies the linearity assumption of the algorithm, as well as on real UCI datasets from \citet{Li2010}. We run the tests on $2000$ examples over $10$ trials and compute the average cumulative reward. 
    
    \paragraph{Results.}
    
    \begin{figure}[h]
        \centering
        \includegraphics[height=120pt,width=0.85\linewidth]{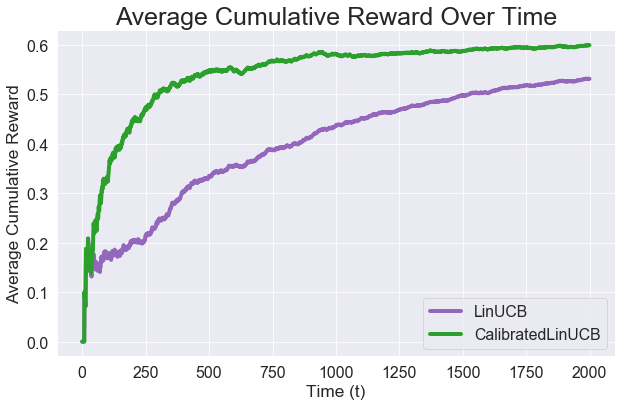}
        \includegraphics[height=120pt, width=0.85\linewidth]{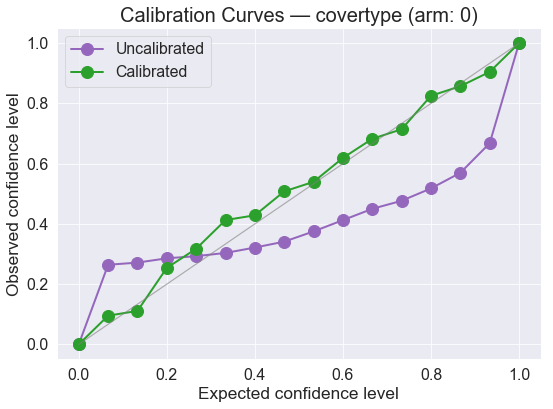}
        \caption{Top: Performnce of CalibLinUCB and LinUCB on the UCI covertype dataset. Bottom: Calibration curves of the LinUCB algorithms on the covertype dataset} 
        \label{fig:linucb_avg_rewards}
    \end{figure}

    We expect the LinUCB algorithm to already be calibrated on the synthetic linear data since the model is well-specified, implying no difference in performance between CalLinUCB and LinUCB. On the real UCI datasets however, the linear assumption might not hold, resulting in miscalibrated estimates of the expected reward. 

    In Table \ref{table:linucb_perf}, we can see that indeed there is no significant difference in performance between the CalLinUCB and LinUCB algorithms on the synthetic linear dataset---they both preform optimally. On the UCI datasets however, we see a noticeable improvement with CalLinUCB on almost all tasks, suggesting that recalibration aids exploration/exploitation in a setting where the model is misspecified. Note that both CalLinUCB and LinUCB perform below the optimum on these datasets, implying linear models are not expressive enough in general for these tasks.

    \paragraph{Analysis.} 
    To get a sense of the effect of calibration on the model's confidence estimates, we can plot the predicted reward with 90\% confidence intervals that the algorithm expected for a chosen arm $a$ at timestep $t$. We can then compare how good this prediction was with respect to the true observed reward. Specifically, we can look at the timesteps where the CalLinUCB algorithm picked the optimal action but the LinUCB algorithm did not, and look at both the algorithms' belief about the predicted reward from both of these actions. An example of this plot can be seen in the appendix on Figure \ref{fig:covertype_predictions}.
    
    A key takeaway from this plot is that the uncalibrated algorithm systematically underestimates the expected reward from the optimal action and overestimates the expected reward of the action it chose instead, resulting in suboptimal actions. The calibrated model does not suffer from this defect, and thus performs better on the task.

\subsection{Model-Based Planning}


    \subsubsection{Inventory Management}
    
    Our first model-based planning task is inventory management~\citep{van1997neuro}. 
    A decision-making agent controls the inventory of a perishable good in a store. Each day, the agent orders items into the store; if the agent under-orders, the store runs out of stock; if the agent over-orders, perishable items are lost due to spoilage.
    Perishable inventory management systems have the potential to positively impact the environment by minimizing food waste and enabling a more effective use of resources \cite{vermeulen2012climate}.
    
    \paragraph{Model.}
    We formalize perishable inventory management for one item using a Markov decision process $(\mathcal{S}, \mathcal{A}, P, r)$. States $s \in S$ are tuples $(d_s, q_s)$ consisting of a calendar day $d_s$ and an inventory state $q_s \in\mathbb{Z}^L$, where $L \geq 1$ is the item shelf-life. Each component $(q_s)_l$ indicates the number of units in the store that expire in $l$ days; the total inventory level is $t_s = \sum_{l=1}^L (q_s)_l$. Transition probabilities $P$ are defined as follows: each day sees a random demand of $D{(s)} \in \mathbb{Z}$ units and sales of $\max(t^{(s)}-D{(s)}, 0)$ units, sampled at random from all the units in the inventory; at the end of the state transition, the shelf-life of the remaining items is decreased by one (spoiled items are recorded and thrown away). Actions $a \in \mathcal{A} \subseteq \mathbb{Z}$ correspond to orders: the store receives items with a shelf life of $L$ before entering the next state $s'$.
    In our experiments we choose the reward $r$ to be the sum of waste and unmet demand due to stock-outs.
    
    \paragraph{Data.}
    
    We use the Corporacion Favorita Kaggle dataset, which consists of historical sales from a supermarket chain in Ecuador. We experiment on the 100 highest-selling items and use data from 2014-01-01 to 2016-05-31 for training and data from 2016-06-01 to 2016-08-31 for testing.
    
    \paragraph{Algorithms.}
    
    We learn a probabilistic model $\hat M : \mathcal{S} \to (\mathbb{R} \to [0,1])$ of the demand $D(s')$ in a future state $s'$ based on information available in the present state $s$. Specifically, we train a Bayesian
    DenseNet~\citep{huang2017densely} to predict sales on each of the next five days based on features from the current day (sales serve as a proxy for demand). We use autoregressive features from the past four days, 7-, 14-, and 28-day rolling means of historical sales, binary indicators for the day of the week and the week of the year, and sine and cosine features over the number of days elapsed in the year. The Bayesian DenseNet has five layers of 128 hidden units with a dropout rate of 0.5 and parametric ReLU non-linearities. We use variational dropout~\citep{gal2016theoretically} to compute probabilistic forecasts from the model.
    
    We use our learned distribution over $D(s')$ to perform online planning on the test set using model predictive control (MPC) learned on the training set. Specifically, we sample 5,000 random trajectories over a 5-step horizon, and choose the first action of the trajectory with the highest expected reward under the model. We estimate the expected reward of each trajectory using 300 Monte Carlo samples from the model.
    
    We also compare the planning approach to a simple heuristic rule that always sets the inventory to $1.5 \cdot \mathbb{E}[D(s')]$, which is the expected demand multiplied by a small safety factor.
    
    \paragraph{Results.}
    
        \begin{table}[]
        \small
        \begin{tabular}{l|l|l|l}
        \textit{}  & \textit{Calibrated}           & \textit{Uncalibrated} & \textit{Heuristic} \\
        \hline
        Shipped    & 332,150 &   319,692    &    338,011     \\
        Wasted      & 7,466 & 3,148           &  13,699  \\
        Stockouts  &     9,327     &   17,358          &    11,817     \\\midrule
        \% Waste &      2.2\%      &     1.0\%      &    4.1\%     \\
        \% Stockouts     &  2.8\%          &   5.4\% &     3.5\%    \\\midrule
        {\em Reward}  &     {\bf -16,793}     &   {-20,506}          &    {-25,516}     \\
        \end{tabular}
        \caption{\label{tab:inventory}Performance of calibrated model planning on an inventory management task. Calibration significantly improves cumulative reward. Numbers are in units, averaged over ten trials.}
        \end{table}      
    
    We evaluate the agent within the inventory management MDP; the demand $D(s)$ is instantiated with the historical sales on test day $d(s)$ (which the agent did not observe). We measure total cumulative waste and stockouts over the 100 items in the dataset, and we report them as a fraction of the total number of units shipped to the store.
    
    Table~\ref{tab:inventory} shows that calibration improves the total cumulative reward by 14\%. The calibrated model incurs waste and out-of-stocks ratios of 2.2\% and 2.8\%, respectively, compared to 1.0\% and 5.4\% for the uncalibrated one. 
    These values are skewed towards a smaller waste, while the objective function penalizes both equally.
    The heuristic has ratios of 4.1\% and 3.5\%.
        
    \subsubsection{Mujoco Environments}
        \begin{figure*}[h]
            \centering
            \includegraphics[width=0.46\linewidth]{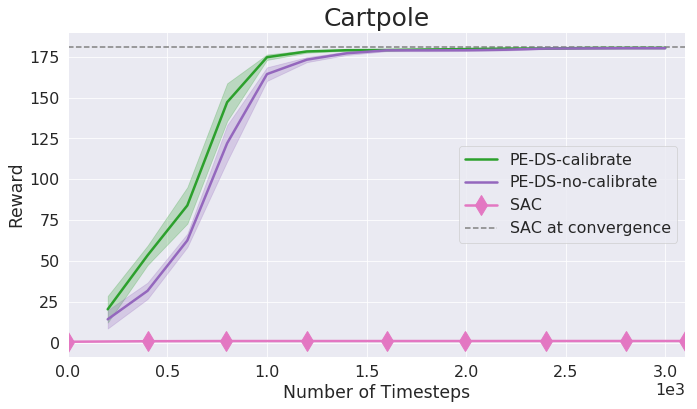} 
            \includegraphics[width=0.46\linewidth]{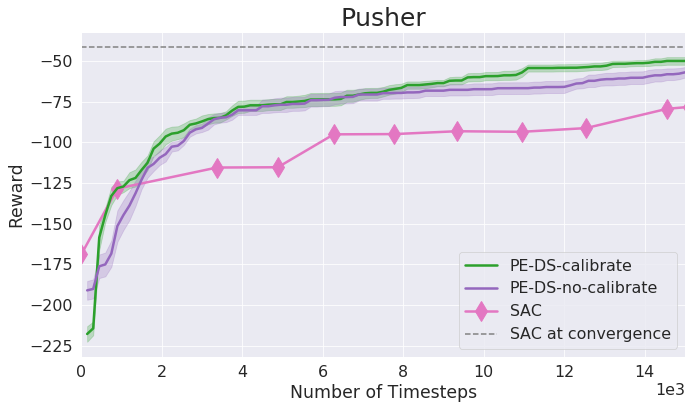}
            \includegraphics[width=0.48\linewidth]{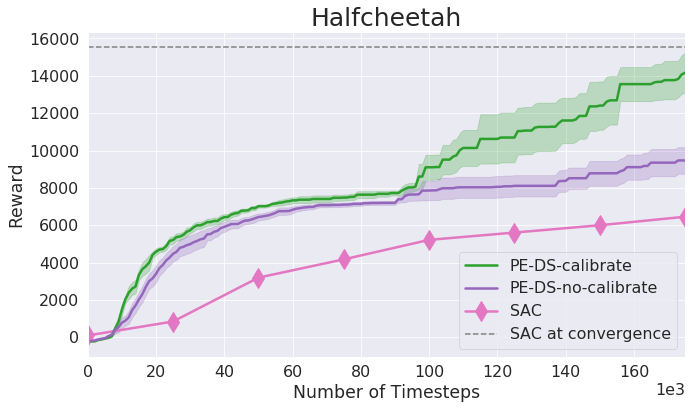}
            \caption{Performance on different control tasks. The calibrated algorithm does at least as good, and often much better than the uncalibrated models. Plots show maximum reward obtained so far, averaged over $10$ trials. Standard error is displayed as the shaded areas.}
            \label{fig:mbrl_perf}
        \end{figure*}    

Our second model-based planning task is continuous control from OpenAI Gym~\citep{brockman2016openai} and the Mujoco robotics simulation environment~\citep{todorov2012mujoco}. Here the agent makes decisions about its torque controls given observation states (e.g. location / velocity of joints) that maximizes the expected return reward. These environments are standard benchmark tasks for deep reinforcement learning.

\paragraph{Setup.} We consider calibrating the probablistic ensemble dynamics model proposed in \cite{chua2018}. In this approach, the agent learns an ensemble of probabilistic neural networks (PE) that captures the environment dynamics $s_{t+1} \sim f_\theta(s_t, a_t)$, which is used for model-based planning with model predictive control. The policy and ensemble model are then updated in an iterative fashion. \citet{chua2018} introduce several strategies for particle-based state propagation, including trajectory sampling with bootstraped models (PE-TS); and distribution sampling (PE-DS), which samples from a multimodal distribution as follows:
\begin{gather}
    s_{t+1} \sim \gN(\E[s^{p}_{t+1}], \Var[s^{p}_{t+1}]), \quad s^{p}_{t+1} \sim f_\theta(s_t, a_t)
\end{gather}
PE-TS and PE-DS achieve the highest sample efficiency among the methods proposed in~\citep{chua2018}.

To calibrate the model, we add a final sigmoid recalibration layer to the sampling procedure in PE-DS at each step. This logistic layer is applied separately per output state dimension and serves as the recalibrator $R$. It is trained on the procedure described in Algorithm \ref{alg:kul_cal}, after every trial, on a separate calibration set using cross entropy loss.

We consider three continuous control environments from \citet{chua2018} \footnote{We omitted the reacher environment because the reference papers did not have SAC results for it.}. For model learning and model-based planning, we follow the training procedure and hyperparameters in \citet{chua2018}, as described in \href{https://github.com/kchua/handful-of-trials}{https://github.com/kchua/handful-of-trials}. We also compare our method against Soft Actor-Critic~\citep{haarnoja2018soft} which is one of the state-of-the-art model-free reinforcement learning algorithms. We use the final convergence reward of SAC as a criterion for the highest possible reward achieved in the task (although it may require orders of magnitude more samples from the environment).

\paragraph{Results.} One of the most important criteria for evaluating reinforcement learning algorithms is sample complexity, i.e, the amount of interactions with the environment in order to reach a certain high expected return. We compare the sample complexities of SAC, PE-DS and calibrated PE-DS in Figure~\ref{fig:mbrl_perf}. Compared to the model-free SAC method, both the model-based methods use much fewer samples from the environment to reach the convergence performance of SAC. However, our recalibrated PE-DS method compares favorably to PE-DS on all three environments.

Notably, the calibrated PE-DS method outperforms both PE-DS by a significant margin on the HalfCheetah environment, reaching near optimal performance at only around 180k timesteps. To our knowledge, the calibrated PE-DS is the most efficient method on these environments in terms of sample complexity. 

\paragraph{Analysis.} In Figure \ref{fig:cartpole_predictions} in the appendix, we visualise the 1-step prediction accuracy for action dimension zero in the Cartpole environment for both PE-DS and calibrated PE-DS. This figure shows that the calibrated PE-DS model is more accurate, has tighter uncertainty bounds, and is better calibrated, especially in earlier trials. Interestingly, we also observe a superior expected return for calibrated PE-DS for earlier trials in Figure~\ref{fig:mbrl_perf}, suggesting that being calibrated is correlated with improvements in model-based prediction and planning.

\section{Discussion}


\paragraph{Limitations.}



A potential failure mode for our method arises when not all forecasts are from the same family of distributions. This can lead to calibrated, but diffuse confidence intervals. 
%
%
Another limitation of the method is its scalability to high-dimensional spaces. In our work, the uncalibrated forecasts were fully factored, and could be recalibrated component-wise. For non-factored distributions, recalibration is computationally intractable and requires approximations such as ones developed for multi-class classification \cite{zadrozny2002transforming}.

Finally, it is possible that uncalibrated forecasts are still effective if they induce a model that correctly ranks the agent's actions in terms of their expected reward (even when the estimates of the reward themselves are incorrect). 

\paragraph{Extensions to Safety.}
Calibration also plays an important role in the domain of RL safety \cite{Berkenkamp2017}. 
In situations where the agent is planning its next action, if it determines the 90\% confidence interval of the predicted next state to be in a safe area but this confidence is miscalibrated, then the agent has a higher chance of entering a failure state.


\section{Related Work}

\paragraph{Model-based Reinforcement Learning.} 

Model-based RL is effective in low-data and/or high-stakes regimes such as robotics~\citep{chua2018}, dialogue systems~\citep{singh2000reinforcement}, education~\citep{rollinson2015predictive}, scientific discovery~\citep{mcintire2016sparse}, or conservation planning~\citep{ermon2012playing}. A big challenge of model-based RL is the model bias, which is being addressed by solutions such as model ensembles~\citep{clavera2018model,kurutach2018model,depeweg2016learning,chua2018} or combining with model-free approaches~\citep{buckman2018sample}. 


\paragraph{Calibration.}
Two of the most widely used calibration procedures are Platt scaling \cite{platt1999probabilistic} and isotonic regression \cite{niculescu2005predicting}. They can be extended from binary to multi-class classification \cite{zadrozny2002transforming}, to structured prediction \cite{kuleshov2015calibrated}, and to regression \cite{kuleshov18}.
Calibration has recently been studied in the context of deep neural networks \cite{guo2017calibration,gal2017concrete,lakshminarayanan2016simple}, identifying important shortcomings in their uncertainties.

\paragraph{Probabilistic forecasting.}
Calibration has been studied extensively in statistics \cite{murphy1973vector,dawid1984prequential} as a criterion for evaluating forecasts \cite{gneiting2007strictly}, including from a Bayesian perspective
\citet{dawid1984prequential}.
Recent studies on calibration have focused on applications in weather forecasting \cite{gneiting2005weather}, and have led to implementations in forecasting systems \cite{raftery2005using}.  \citet{gneiting2007probabilistic} introduced a number of definitions of calibration for continuous variables, complementing early work on classification \cite{murphy1973vector}.
\section{Conclusion}
Probabilistic models of the environment can significantly improve the performance of reinforcement learning agents. However, proper uncertainty quantification is crucial for planning and managing exploration/exploitation tradeoffs. We demonstrated a general recalibration technique that can be combined with most model-based reinforcement learning algorithms to improve performance. Our approach leads to minimal computational overhead, and empirically improves performance across a range of tasks.

\section*{Acknowledgments}
This research was supported by NSF (\#1651565, \#1522054, \#1733686), ONR (N00014-19-1-2145), AFOSR (FA9550-19-1-0024), Amazon AWS, and Lam Research.

\bibliography{main}
\bibliographystyle{icml2019}

\appendix
\newpage
\newpage
\onecolumn
\appendix
\section*{Appendices}

\section{Recalibration}

Most predictive models are not calibrated out-of-the-box due to modeling bias and computational approximations.
However, given an arbitrary pre-trained forecaster $H: \mathcal{X} \to (\mathcal{Y} \to [0, 1])$ 
that outputs CDFs $F$, we may train an auxiliary model $R : [0,1] \to [0,1]$ such that the forecasts $R \circ F$ are calibrated in the limit of enough data.  This procedure, called recalibration, is simple to implement, computationally inexpensive, and can be applied to any probabilistic regression model in a black-box manner. Furthermore, it does not increase the loss function of the original model if it belongs to a large family of objectives called proper losses~\citep{kull2015novel,kuleshov2017estimating}. 

\begin{algorithm}
  \caption{{\footnotesize \textsc{Calibrate}: Recalibration of Transition Dynamics}}
  \label{alg:kul_cal}
  \textbf{Input:}
    Uncalibrated transition model $\TM: \mathcal{S} \times \mathcal{A} \to \mathcal{P}(\mathcal{S})$ that outputs CDFs $F_{s,a} : \mathcal{S} \to [0,1]$, and calibration set $\mathcal{D}_\text{cal} = \{(s_t, a_t), s_{t+1}\}\}_{t=1}^{N}$ \\
    \textbf{Output:}
    Auxiliary recalibration model $R : [0,1] \to [0,1]$. \\
   \begin{enumerate}
   \vspace{-5mm}
    \item Construct a recalibration dataset
    \vspace{-2mm}
    \[
        \mathcal{D} = \left\{\left(F_{s_t, a_t}(s_{t+1}) , \hat{P}(F_{s_t, a_t}(s_{t+1}))\right) \right\}_{t=1}^N
    \]
    \vspace{-4mm}
    where
    \vspace{-1mm}
    \[
        \hat{P}(p) = \frac{1}{N} \sum_{t = 1}^N \mathbb{I}[F_{s_t, a_t}(s_{t+1}) \leq p].
    \]
    
    \vspace{-4mm}
    \item Train a model $R: [0,1] \to [0,1]$ (e.g. sigmoid or isotonic regression) on $\mathcal{D}$.
    
   \end{enumerate}
\end{algorithm}

When $\mathcal S$ is discrete, a popular choice of $R$ is Platt scaling~\citep{platt1999probabilistic}; \citet{kuleshov18} extends Platt scaling to continuous variables. Either of these methods can be used within our framework. 
Since this paper focuses on continuous state spaces, we use the method of \citet{kuleshov18} described in Algorithm \ref{alg:kul_cal}, unless otherwise indicated.

\section{Calibrated Discrete MDP}
We provide a proof in the discrete case that calibrated uncertainties result in correct expectations with respect to the true probability distribution, and thus using calibrated dynamics allow accurate evaluation of policies.

Consider an infinite-horizon discrete state MDP $(S, A, T, R)$ and a policy $\pi$ over this MDP. We are interested in evaluating the goodness of this policy at any state $s$ using the usual value iteration:

\begin{equation}
    V_{\pi}(s) = R(s) + \gamma \mathlarger{\mathop{\mathbb{E}}}_{a \sim \pi(\cdot | s)} \ \mathlarger{\mathop{\mathbb{E}}}_{s' \sim T(\cdot | s, a)} [V(s')].
\end{equation}

For the given policy $\pi$, there exists a stationary distribution $\sigma_\pi$ that would be obtained from running this policy for a long time. We define the value of the entire policy $V(\pi)$ as an expectation with respect to this stationary distribution i.e. 

\begin{equation}
    V(\pi) = \mathlarger{\mathop{\mathbb{E}}}_{s \sim \sigma_\pi} [V(s)].
\end{equation}

We want to show that replacing the true dynamics $T$ with calibrated dynamics $\TM$ does not affect our evaluation of the policy $\pi$. To have a well-defined notion of calibration, we need to define a joint distribution over the inputs and outputs of a predictive model. The inputs are the current state-action pair $(s, a)$ the outputs are distributions over the next state $s'$. To define a joint distribution $\mathbb{P}$ over  $(S, A)$ and $S'$, we use the stationary distribution $\sigma_\pi$, the policy $\pi$, and the transition dynamics $T$ to define the sub-components using the chain rule:

\begin{align}
    \mathbb{P}((s, a), s') 
        &= \mathbb{P}(s' | s, a) \ \mathbb{P}(a | s) \ \mathbb{P}(s) \\
        \label{eqn:joint_sas}
        &= T(s' | s, a) \ \pi(a | s) \ \sigma_\pi(s).
\end{align}

Note that defining the joint distribution $\mathbb{P}$ this way lets us rewrite $V(\pi)$ more simply as

\begin{align}
    V(\pi) 
        &= \mathlarger{\mathop{\mathbb{E}}}_{s \sim \sigma_\pi} [V(s)] \\
        &= \mathlarger{\mathop{\mathbb{E}}}_{s \sim \sigma_\pi}[R(s)] + \gamma \mathlarger{\mathop{\mathbb{E}}}_{s \sim \sigma_\pi} \mathlarger{\mathop{\mathbb{E}}}_{a \sim \pi(\cdot | s)} \ \mathlarger{\mathop{\mathbb{E}}}_{s' \sim T(\cdot | s, a)} [V(s')] \\
        &= \mathlarger{\mathop{\mathbb{E}}}_{s \sim \sigma_\pi}[R(s)] + \gamma \mathlarger{\mathop{\mathbb{E}}}_{((s, a), s') \sim \mathbb{P}} [V(s')] \\
        &= \mathlarger{\mathop{\mathbb{E}}}_{s \sim \sigma_\pi}[R(s)] + \gamma \mathlarger{\mathop{\mathbb{E}}}_{s' \sim \mathbb{P}(S')} [V(s')] 
\end{align}

These definitions allow us to state our theorem.

\begin{theorem}
Let  $(S, A, T, R)$ be a discrete MDP and let $\pi$ be a stochastic policy over this MDP. Define $\mathbb{P}$ to be a joint distribution over state-action and future state pairs $((s,a), s')$ as outlined in Equation \ref{eqn:joint_sas}. Then the value of policy $\pi$ under the true dynamics $T$ is equal to the value of the policy under some other dynamics $\TM$ that are calibrated with respect to $\mathbb{P}$.
\end{theorem}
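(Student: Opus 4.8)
The plan is to reduce the statement to the Lemma on expectations under calibrated models, together with one structural observation: the next-state marginal $\mathbb{P}(S')$ of the joint in Equation~\ref{eqn:joint_sas} is exactly $\sigma_\pi$. First I would dispose of the literal ``some'' reading by noting that $\TM=\TT$ is itself calibrated with respect to $\mathbb{P}$ --- the event $\{\TT(s'\mid S,A)=p\}$ is measurable with respect to $(S,A)$, so the tower rule gives $\mathbb{P}(S'=s'\mid \TT(s'\mid S,A)=p)=p$ --- whence the substantive content is that $V'(\pi)=V(\pi)$ holds for \emph{every} $\TM$ calibrated with respect to $\mathbb{P}$, and it is this that I would prove. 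Fix such a $\TM$, let $V'_\pi$ be its policy value function, i.e. the fixed point of $V'_\pi(s)=R(s)+\gamma\,\E_{a\sim\pi(\cdot\mid s)}\E_{s'\sim\TM(\cdot\mid s,a)}[V'_\pi(s')]$, and set $V'(\pi)=\E_{s\sim\sigma_\pi}[V'_\pi(s)]$, evaluated against the same $\sigma_\pi$ that is used on the true side.

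Then I would take the expectation of this Bellman recursion over $s\sim\sigma_\pi$. Since $a\sim\pi(\cdot\mid s)$, the pair $(s,a)$ has law $\mathbb{P}(S,A)$ from Equation~\ref{eqn:joint_sas}, so $V'(\pi)=\E_{s\sim\sigma_\pi}[R(s)]+\gamma\,\E_{(s,a)\sim\mathbb{P}(S,A)}\E_{s'\sim\TM(\cdot\mid s,a)}[V'_\pi(s')]$. Applying the Lemma with $X=(S,A)$, $Y=S'$, $Q=\TM$, and test function $G=V'_\pi$ collapses the double expectation to $\E_{s'\sim\mathbb{P}(S')}[V'_\pi(s')]$. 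Now $\mathbb{P}(S'=s')=\sum_{s,a}\sigma_\pi(s)\pi(a\mid s)\TT(s'\mid s,a)=\sigma_\pi(s')$ by stationarity of $\sigma_\pi$ for the Markov chain induced by $\pi$ and $\TT$; substituting yields the self-consistent identity $V'(\pi)=\E_{s\sim\sigma_\pi}[R(s)]+\gamma\,V'(\pi)$, hence $V'(\pi)=\tfrac{1}{1-\gamma}\E_{s\sim\sigma_\pi}[R(s)]$. Running the identical computation with $\TT$ in place of $\TM$ (or reading off the chain of equalities derived for $V(\pi)$ in the excerpt preceding the theorem) gives $V(\pi)=\tfrac{1}{1-\gamma}\E_{s\sim\sigma_\pi}[R(s)]$ as well, so $V'(\pi)=V(\pi)$.

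The step I expect to require the most care is the self-reference: the Lemma is invoked with a test function $G=V'_\pi$ that itself depends on $\TM$. This is legitimate --- once $\TM$ is fixed, $V'_\pi$ is just a fixed bounded function $S\to\R$ --- but it means the argument does \emph{not} establish $V'_\pi=V_\pi$ pointwise (indeed the two value functions differ in general); it only closes the loop $V'(\pi)=\E_{\sigma_\pi}[R]+\gamma V'(\pi)$, and the cancellation of $\gamma V'(\pi)$ is what delivers the result. A related hygiene point worth stating is that $\sigma_\pi$ need not be the \emph{unique} stationary distribution of the recalibrated chain; but the theorem evaluates the value at $\sigma_\pi$ itself, and the Lemma applied to the indicators $G=\mathbb{I}[\cdot=s'']$ shows $\sum_{s,a}\sigma_\pi(s)\pi(a\mid s)\TM(s''\mid s,a)=\sigma_\pi(s'')$, i.e. $\sigma_\pi$ is at least \emph{a} stationary distribution of that chain, which is all that is used. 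Discreteness of $S$ enters only so that the Lemma applies and every sum above is a genuine finite sum; beyond the existence of $\sigma_\pi$, no ergodicity or regularity hypotheses are needed.
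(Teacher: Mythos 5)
Your proof is correct, and it takes a meaningfully more careful route than the paper's. Both arguments hinge on the same application of the calibration lemma with $x=(s,a)$ and $y=s'$, but they differ in what is fed to it. The paper's proof defines $\widehat{V}(\pi)$ by substituting $\TM$ only into the outermost transition while keeping the \emph{true} value function $V$ as the continuation value, so the lemma immediately collapses $\E_{(s,a)\sim\mathbb{P}}\,\E_{s'\sim\TM(\cdot|s,a)}[V(s')]$ to $\E_{s'\sim\mathbb{P}(S')}[V(s')]$ and the identity previously derived for $V(\pi)$ finishes the proof in one line. You instead take $V'_\pi$ to be the genuine fixed point of the Bellman recursion under $\TM$, which forces two extra steps the paper leaves implicit: the observation that $\mathbb{P}(S')=\sigma_\pi$ by stationarity, and the closing of the self-referential identity $V'(\pi)=\E_{\sigma_\pi}[R]+\gamma V'(\pi)$, yielding the explicit value $\tfrac{1}{1-\gamma}\E_{\sigma_\pi}[R]$ on both sides. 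What your version buys is a proof of the stronger (and more natural) reading of ``the value of $\pi$ under $\TM$,'' which is arguably what the theorem statement in the main text asserts; it also exposes that, once the value is averaged against $\sigma_\pi$, calibration enters only through the fact that $\sigma_\pi$ remains stationary for the recalibrated chain, and that both quantities reduce to a dynamics-independent constant. Your side remarks --- that $G=V'_\pi$ is a legitimate test function once $\TM$ is fixed, that the argument does not give $V'_\pi=V_\pi$ pointwise, and that the ``some other dynamics'' quantifier is trivially witnessed by $\TM=T$ itself --- are all accurate and worth keeping.
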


\begin{proof}
    Since $\TM$ is calibrated with respect to $\mathbb{P}$, we have $\mathbb{P}(s' = j \mid \TM(s' = j | s, a) = p)) = p$. Let $\widehat{V}(\pi)$ be the value of policy $\pi$ under $\TM$. Then we have
    
    \begin{align}
        \widehat{V}(\pi) 
            &= \mathlarger{\mathop{\mathbb{E}}}_{s \sim \sigma_\pi}[R(s)] + \gamma \mathlarger{\mathop{\mathbb{E}}}_{(s, a) \sim \mathbb{P}((s, a))} \ \mathlarger{\mathop{\mathbb{E}}}_{s' \sim \TM(s' | s, a)} [V(y)] \\
            &= \mathlarger{\mathop{\mathbb{E}}}_{s \sim \sigma_\pi}[R(s)] + \gamma \mathlarger{\mathop{\mathbb{E}}}_{s' \sim \mathbb{P}(S')} [V(y)] \\
            &= V(\pi),
    \end{align}
    
    where the second line follows immediately from Lemma \ref{lem:calib_expectation_equal} when we take $x = (s, a)$ and $y = s'$.

\end{proof}

\begin{lemma}
\label{lem:calib_expectation_equal}
Consider a pair of jointly distributed variables $(X, Y) \sim \mathbb{P}$ over $\mathbb{X}$ and $\mathbb{Y}$ where $\mathbb{X} = \{x_1, \ldots, x_k\}$ and $\mathbb{Y} = \{y_1, \ldots, y_m\}$ are discrete spaces, and let $Q(Y | X)$ be a distribution that is calibrated with respect to $\mathbb{P}$. In other words,  $\mathbb{P}(Y = y \mid Q(Y = y \mid X) = p) = p$. Then, for any arbitrary function $g : \mathbb{Y} \to \mathcal{S}$ with which we want to take an expectation, the following equality holds:

\begin{equation}
    \label{eqn:calib_expectation}
    \mathlarger{\mathop{\mathbb{E}}}_{y \sim \mathbb{P}(Y)} \left[ g(y) \right] \quad = \mathlarger{\mathop{\mathbb{E}}}_{\substack{x \sim \mathbb{P}(X) \\ y \sim Q(Y | X = x)}} \left[g(y)\right].
\end{equation}
\end{lemma}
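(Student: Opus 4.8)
The plan is to prove Lemma \ref{lem:calib_expectation_equal} by partitioning the outcome space $\mathbb{Y}$ according to the \emph{predicted} probability values and exploiting the calibration identity $\mathbb{P}(Y = y \mid Q(Y = y \mid X) = p) = p$. The right-hand side expands as $\sum_{x} \mathbb{P}(X = x) \sum_{y} Q(Y = y \mid X = x)\, g(y)$, which after swapping the order of summation becomes $\sum_{y} g(y) \left( \sum_{x} \mathbb{P}(X = x)\, Q(Y = y \mid X = x) \right)$. So it suffices to show, for each fixed $y \in \mathbb{Y}$, that the inner weight equals the true marginal: $\sum_{x} \mathbb{P}(X = x)\, Q(Y = y \mid X = x) = \mathbb{P}(Y = y)$.

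To establish this marginal identity, I would fix $y$ and group the states $x_1,\dots,x_k$ by the value $p = Q(Y = y \mid X = x)$ they induce. Let $\mathcal{P}_y = \{ Q(Y = y \mid X = x) : x \in \mathbb{X}\}$ be the (finite) set of such values, and for $p \in \mathcal{P}_y$ let $E_p = \{x : Q(Y = y \mid X = x) = p\}$. Then $\sum_{x} \mathbb{P}(X = x)\, Q(Y = y \mid X = x) = \sum_{p \in \mathcal{P}_y} p \cdot \mathbb{P}(X \in E_p) = \sum_{p \in \mathcal{P}_y} p \cdot \mathbb{P}(Q(Y = y \mid X) = p)$. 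Now apply calibration: $p = \mathbb{P}(Y = y \mid Q(Y = y \mid X) = p)$, so each term equals $\mathbb{P}(Y = y \mid Q(Y = y \mid X) = p)\,\mathbb{P}(Q(Y = y \mid X) = p) = \mathbb{P}(Y = y,\ Q(Y = y \mid X) = p)$. Summing over the partition $\{E_p\}_{p \in \mathcal{P}_y}$ of $\mathbb{X}$ telescopes to $\mathbb{P}(Y = y)$, which is exactly what we wanted. Substituting back gives $\sum_{y} g(y)\,\mathbb{P}(Y = y) = \mathbb{E}_{y \sim \mathbb{P}(Y)}[g(y)]$, the left-hand side.

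The main obstacle — really the only subtlety — is making the conditioning in the calibration statement rigorous: $Q(Y = y \mid X)$ must be read as a random variable (a function of $X$), so that ``$Q(Y = y \mid X) = p$'' is a genuine event with positive probability whenever $p \in \mathcal{P}_y$, and the events $\{Q(Y=y\mid X) = p\}$ over $p \in \mathcal{P}_y$ form a partition of the sample space. Because $\mathbb{X}$ is finite this is automatic — each $E_p$ is a union of atoms $\{X = x\}$ — but it is worth stating explicitly so the law-of-total-probability step is unambiguous. One should also note that $g$ maps into $\mathcal{S}$ (here a vector space, or more generally any space where the expectation of an $\mathcal{S}$-valued function makes sense); the argument is linear in $g$ and applies coordinatewise, so no extra assumptions on $g$ beyond well-definedness of the two expectations are needed. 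Everything else is bookkeeping with finite sums.
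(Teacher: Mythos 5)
Your proof is correct and follows essentially the same route as the paper's: both reduce the claim to the marginal identity $\sum_{x} \mathbb{P}(X = x)\, Q(Y = y \mid X = x) = \mathbb{P}(Y = y)$, established by conditioning on the level sets of the random variable $Q(Y = y \mid X)$ and invoking calibration together with the law of total probability. The only cosmetic differences are that you argue from right to left rather than left to right, and you use a finite sum over the attained values $\mathcal{P}_y$ where the paper writes an integral $\int_0^1 \cdot \, dp$ over point masses --- your version is, if anything, the cleaner rendering of the same step.
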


\begin{proof}
We can rewrite the expectation on the LHS of Equation \ref{eqn:calib_expectation} using the law of total probability and the chain rule to get:

\begin{align*}
    \mathlarger{\mathop{\mathbb{E}}}_{y \sim \mathbb{P}(Y)} \left[ g(y) \right] 
        &= \sum_{y \in \mathbb{Y}} g(y) \mathbb{P}(Y = y) \\
        &= \sum_{y \in \mathbb{Y}} g(y) \int_{0}^1 \mathbb{P}(Y = y, Q(Y = y \mid X) = p) \ dp \\
        &= \sum_{y \in \mathbb{Y}} g(y) \int_{0}^1 \mathbb{P}(Y = y \mid Q(Y = y \mid X) = p) \ \mathbb{P}(Q(Y = y \mid X) = p) \ dp.
\end{align*}

Note that in the above derivation we perform the following slight abuse of notation:
$$
\{Q(Y=y|X)=p\} = \{X \mid Q(Y=y|X)=p\}.
$$

We can apply the calibration assumption to replace the conditional term with $p$ and rewrite $\mathbb{P}(Q(Y = y \mid X) = p)$ as a sum over elements of $\mathbb{X}$. This gives:

\begin{align*}
    \mathlarger{\mathop{\mathbb{E}}}_{y \sim \mathbb{P}(Y)} \left[ g(y) \right] 
        &= \sum_{y \in \mathbb{Y}} g(y) \int_{0}^1 p \cdot \ \mathbb{P}(Q(Y = y \mid X) = p) \ dp \\
        &= \sum_{y \in \mathbb{Y}} g(y) \int_{0}^1 p \cdot \ \sum_{x \in \mathbb{X}} \mathbb{I}[Q(Y = y \mid X = x) = p] \cdot \mathbb{P}(X = x) \ dp \\
        &= \sum_{y \in \mathbb{Y}} g(y) \sum_{x \in \mathbb{X}} Q(Y = y \mid X = x)\cdot \mathbb{P}(X = x) \\
        &= \sum_{x \in \mathbb{X}} \mathbb{P}(X = x) \sum_{y \in \mathbb{Y}} g(y) \cdot  Q(Y = y \mid X = x) \\
        &= \mathlarger{\mathop{\mathbb{E}}}_{x \sim \mathbb{P}(X)} \ \ \mathlarger{\mathop{\mathbb{E}}}_{y \sim Q(Y | X = x)} \left[ g(y) \right].
\end{align*}
\end{proof}

\section{Additional Figures}
\begin{figure}[h]
    \centering
    \includegraphics[width=0.45\linewidth]{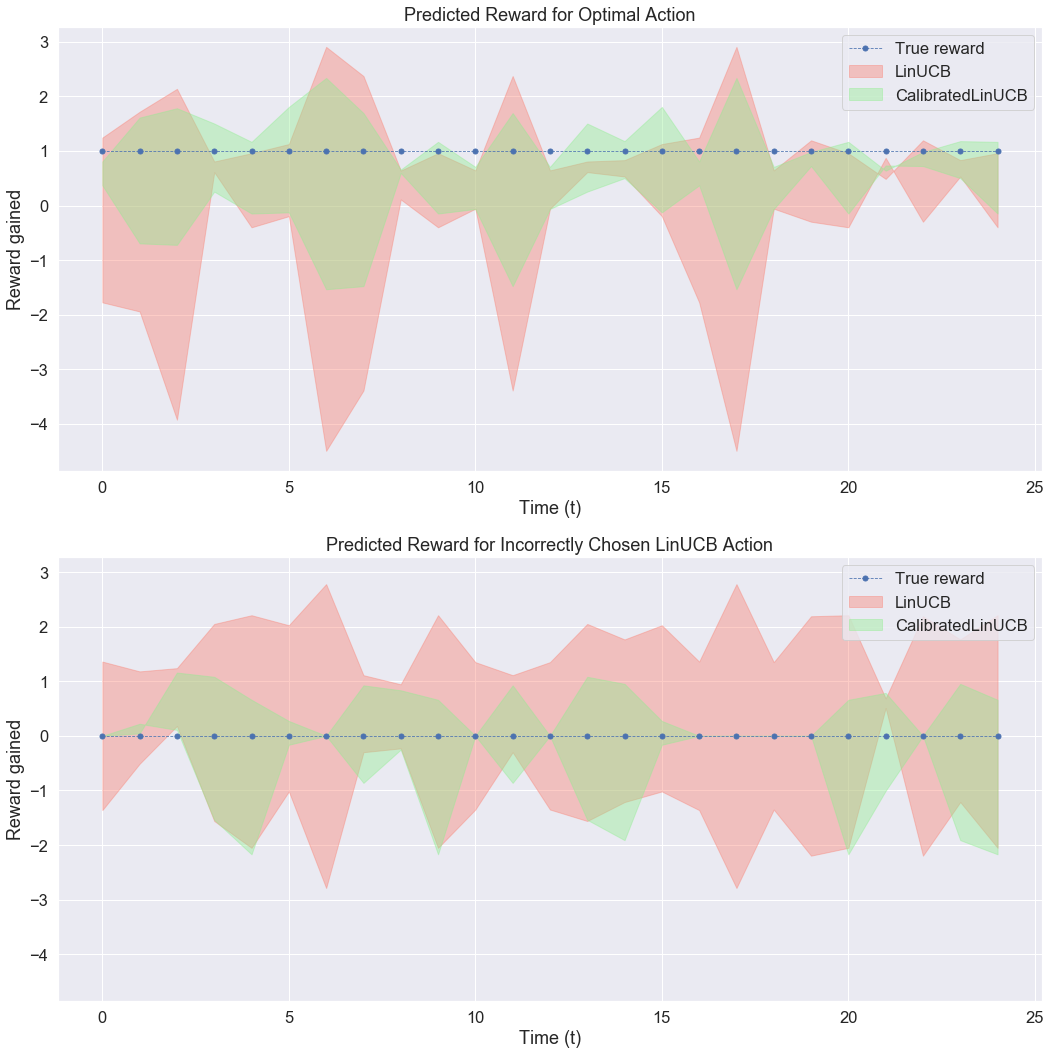} 
    \caption{Predicted expected reward for both LinUCB and CalLinUCB algorithms on the covertype dataset. Figures show predictions at random timesteps where CalLinUCB chose the optimal action but LinUCB did not. Top: Predicted reward of both algorithms for the optimal action. Bottom: Predicted reward of both algorithms for the action which the algorithm chose to pick instead of the optimal action at that timestep. We can see LinUCB consistently underestimates reward from optimal action and overestimates reward from other actions. On the other hand, CalLinUCB is more accurate in its uncertainty predictions.
    }
    \label{fig:covertype_predictions}
\end{figure}  

\begin{figure}[h!]
    \centering
    \includegraphics[width=0.45\linewidth]{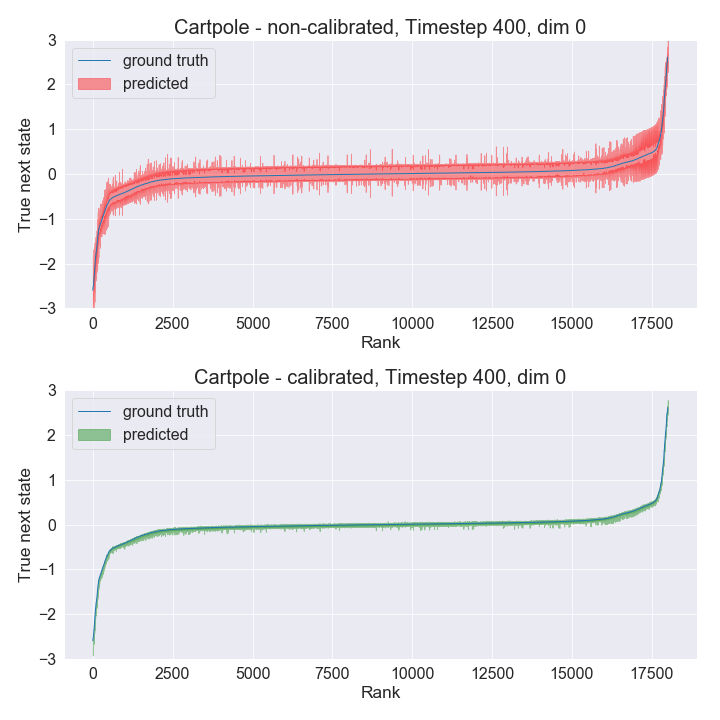}
    \includegraphics[width=0.45\linewidth]{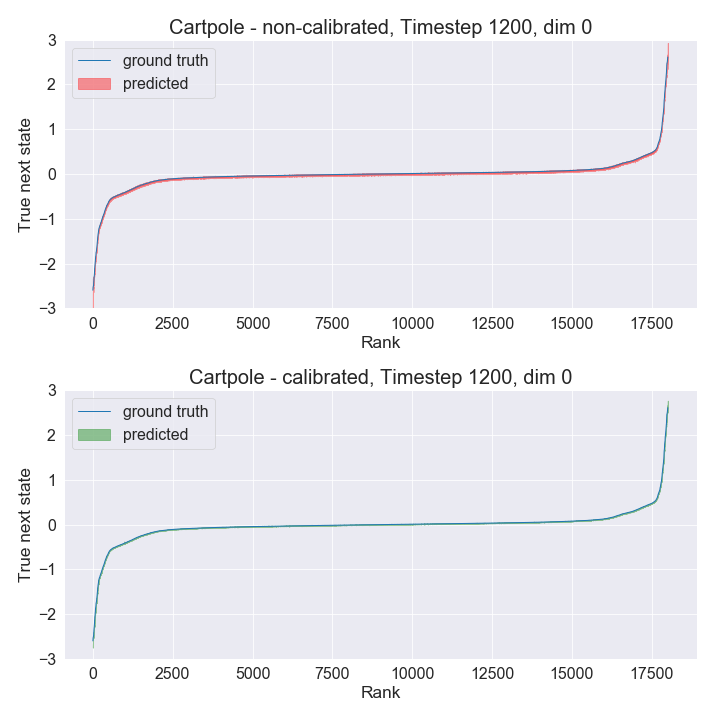}
    \caption{Cartpole future state predictions. The calibrated algorithm has much tighter uncertainties around the true next state in early training iterations. Later into training, their uncertainties are almost equivalent.}
\label{fig:cartpole_predictions}
\end{figure}

\end{document}